\documentclass[a4paper, 11pt]{article}

\usepackage{microtype}
\usepackage{graphicx}
\usepackage{subfigure}
\usepackage{booktabs}

\usepackage[hidelinks]{hyperref}

\usepackage{amsmath}
\usepackage{amssymb}
\usepackage{mathtools}
\usepackage{amsthm}

\usepackage[capitalize,noabbrev,nameinlink]{cleveref}

\theoremstyle{plain}
\newtheorem{theorem}{Theorem}[section]

\newtheorem{lemma}[theorem]{Lemma}

\theoremstyle{definition}
\newtheorem{definition}[theorem]{Definition}

\theoremstyle{remark}
\newtheorem{remark}[theorem]{Remark}

\usepackage{fullpage}
\usepackage{centernot}
\usepackage{bm}
\usepackage{bbm}
\usepackage{thm-restate}

\usepackage{tikz}
\usetikzlibrary{shapes.geometric, shapes.arrows}
\usetikzlibrary{positioning}
\usetikzlibrary{decorations.pathmorphing}
\usetikzlibrary{calc}
\usetikzlibrary{fit}

\newcommand{\cprn}[1]{\!\left(#1\right)}

\newcommand{\abs}[1]{\left|#1\right|}

\newcommand{\bars}[1]{\left\|#1\right\|}
\newcommand{\eps}{\varepsilon}
\newcommand{\N}{\mathbb{N}}
\newcommand{\Pa}{\mathbf{Pa}}
\newcommand{\pa}{\mathbf{pa}}

\newcommand{\ind}{\perp\!\!\!\perp}
\newcommand{\NP}{\mathsf{NP}}
\newcommand{\poly}{\mathrm{poly}}
\newcommand{\tv}{d_\mathrm{TV}}
\newcommand{\dtv}{\tv}
\newcommand{\PROMISE}{\texttt{REALIZABLE}-\texttt{LEARN}}
\newcommand{\LEARN}{\texttt{LEARN}}
\newcommand{\DBFAS}{\texttt{DBFAS}}
\newcommand{\LEARNDBFAS}{\texttt{LEARN}-\texttt{DBFAS}}

\newcommand{\bE}{\bm{E}}
\newcommand{\bS}{\bm{S}}
\newcommand{\bV}{\bm{V}}
\newcommand{\bX}{\bm{X}}

\newcommand{\bZ}{\bm{Z}}

\newcommand{\bd}{\bm{d}}
\newcommand{\bx}{\bm{x}}

\newcommand{\bbP}{\mathbbm{P}}
\newcommand{\bbQ}{\mathbbm{Q}}

\newcommand{\Qset}{\pmb{\mathbb{Q}}}

\newcommand{\cG}{\mathcal{G}}
\newcommand{\cH}{\mathcal{H}}
\newcommand{\cK}{\mathcal{K}}

\newcommand{\cO}{\mathcal{O}}

\allowdisplaybreaks

\title{Learnability of Parameter-Bounded Bayes Nets}

\author{
{\bf Arnab Bhattacharyya} \\
National University of Singapore
\and
{\bf Davin Choo} \\
National University of Singapore
\and
{\bf Sutanu Gayen} \\
Indian Institute of Technology Kanpur
\and
{\bf Dimitrios Myrisiotis} \\
CNRS@CREATE LTD.
}

\begin{document}

\maketitle

\begin{abstract}
Bayes nets are extensively used in practice to efficiently represent joint probability distributions over a set of random variables and capture dependency relations.
In a seminal paper, Chickering et al. (JMLR 2004) showed that given a distribution $\bbP$, that is defined as the marginal distribution of a Bayes net, it is $\NP$-hard to decide whether there is a parameter-bounded Bayes net that represents $\bbP$.
They called this problem \LEARN.
In this work, we extend the $\NP$-hardness result of \LEARN\ and prove the $\NP$-hardness of a promise search variant of \LEARN, whereby the Bayes net in question is guaranteed to exist and one is asked to find such a Bayes net.
We complement our hardness result with a positive result about the sample complexity that is sufficient to recover a parameter-bounded Bayes net that is close (in TV distance) to a given distribution $\bbP$, that is represented by some parameter-bounded Bayes net, generalizing a degree-bounded sample complexity result of Brustle et al. (EC 2020).
\end{abstract}

\section{Introduction}

\label{sec:intro}

Bayesian networks \cite{pearl1988probabilistic}, or simply Bayes nets, are directed acyclic graphs (DAGs), accompanied by a collection of conditional probability distributions (that is, one for each vertex), that are used to represent joint probability distributions over dependent random variables in an elegant and succinct manner.
As an example, consider a distribution $\bbP$ over five Boolean variables $\bX = \{X_1, X_2, X_3, X_4, X_5\}$.
Regardless of the dependencies between the variables, $\bbP$ can always be represented by a lookup table that has $2^5 - 1 = 31$ entries, with one entry for each possible Boolean outcome except one.
However, known dependencies between variables may induce a sparser representation.
If $X_2$ depends on $X_1$, $X_3$ depends on $\{X_2, X_5\}$, and $X_4$ depends on $X_3$, then the joint distribution $\bbP(x_1, \ldots, x_5)$ decomposes as
\[
\bbP(x_1) \cdot \bbP(x_2 \mid x_1) \cdot \bbP(x_3 \mid x_2, x_5) \cdot \bbP(x_4 \mid x_3) \cdot \bbP(x_5).
\]
In fact, one can represent $\bbP$ with a relatively sparse Bayes net $\cG$ (see \cref{fig:Bayes-net}) with conditional probability tables (CPTs) associated with each vertex.
Observe that $8 < 31$ numbers suffice to describe the CPTs:
One for each of the Bernoulli distributions of $X_1$ and $X_5$, two for the conditional probability distributions of $X_2$ and $X_4$, and four for that of $X_3$.
In the rest of this work, we refer to the numbers used in defining the CPTs above as \emph{parameters}.

\begin{figure}[ht]
\centering
\begin{tikzpicture}[scale=1.2]
\node[draw, circle] (a1) at (0,0) {$X_1$};
\node[draw, circle] (a2) at (1.5,0) {$X_2$};
\node[draw, circle] (a3) at (3,0) {$X_3$};
\node[draw, circle] (a4) at (4.5,0) {$X_4$};
\node[draw, circle] (a5) at (3,1.5) {$X_5$};
\draw[thick, -stealth] (a1) -- (a2);
\draw[thick, -stealth] (a2) -- (a3);
\draw[thick, -stealth] (a3) -- (a4);
\draw[thick, -stealth] (a5) -- (a3);
\node[] at (2.25,-1) {$\cG$};

\node[draw, circle] (b1) at (6,0) {$X_1$};
\node[draw, circle] (b2) at (7.5,0) {$X_2$};
\node[draw, circle] (b4) at (9,0) {$X_4$};
\node[draw, circle] (b5) at (8,1.5) {$X_5$};
\draw[thick, -stealth] (b1) -- (b2);
\draw[thick, -stealth] (b2) -- (b4);
\draw[thick, -stealth] (b5) -- (b4);
\node[] at (7.5,-1) {$\cH$};
\end{tikzpicture}
\caption{\emph{Left:} A Bayes net $\cG$ such that the distribution $\bbP$ of our example is represented by $\cG$.
\emph{Right:} A Bayes net $\cH$ such that the distribution that arises from the distribution $\bbP$ after marginalizing out $X_3$ is represented by $\cH$.}
\label{fig:Bayes-net}
\end{figure}

It is a standard result~\cite{pearl1988probabilistic,chickering2004large} that there exists a Bayes net $\cG$ of $p$ parameters that represents a probability distribution $\bbP$ if and only if $\bbP$ is Markov with respect to some Bayes net $\cH$ of $p$ parameters that has the same underlying DAG as $\cG$.
(It could be that $\cG = \cH$, but this is not necessary.)
Here, the property of a distribution $\bbP$ being Markov with respect to a Bayes net $\cG$ means that a certain graphical separation condition in the underlying DAG of $\cG$, known as d-separation, implies conditional independence in $\bbP$ (see \cref{sec:prel-graphs} for formal definitions).
We shall make use of this equivalence in the sequel.

A series of works studied the problem of learning the underlying DAG of a Bayes net from data, by focusing on maximizing certain scoring criterion by the underlying DAG, see, e.g., \cite{cooper1992bayesian,spiegelhalter1993bayesian,heckerman1995learning}.
This task was later shown to be $\NP$-hard by \cite{chickering1996learning}, which then raised the following natural fundamental question:
\begin{quote}
Given a succinct description of a distribution $\bbP$ (that is \emph{not} in terms of a Bayes net), how easy is it to find a Bayes net $\cG$ such that $\bbP$ is Markov with respect to $\cG$?
\end{quote}
Unfortunately, \cite{chickering2004large} showed that deciding whether a given distribution $\bbP$ is Markov with respect to some Bayes net of at most $p \in \N$ parameters or not is $\NP$-hard.

\begin{remark}
In \cite{chickering2004large}, the distribution $\bbP$ is described as the certain \emph{marginal of a Bayes net} of small in-degree, i.e., a succinct Bayes net description over variables $\bX$, along with a subset of variables $\bS \subseteq \bX$ to marginalize out.
For example, any distribution $\bbP'$ that is Markov with respect to the left Bayes net in \cref{fig:Bayes-net} is Markov with respect to the right Bayes net in \cref{fig:Bayes-net} after marginalizing out $X_3$.
Note that all possible distributions over $\bX$ are Markov with respect to some Bayes net over a clique, but such a Bayes net requires $2^{|\bX|} - 1$ parameters.
\end{remark}

Regarding upper bounds, there are well-known algorithms for learning the underlying DAG of a Bayes net from distributional samples such as the PC \cite{spirtes2000causation} and GES \cite{chickering2002optimal} algorithms.
Recently, \cite{brustle2020multi} also gave finite sample guarantees of learning Bayes nets (that have $n$ nodes, each taking values over an alphabet $\Sigma$) from samples.
When given the promise that the underlying DAG has bounded in-degree of $d$, \cite[Theorem 10]{brustle2020multi} asserts that using
\begin{equation}
\label{eq:BCD20-bound}
\cO\left( \frac{\log\frac{1}{\delta}}{\eps^2} \left( n \abs{\Sigma}^{d+1} \log \left( \frac{n {\color{black}\abs{\Sigma}}}{\eps} \right) + n \cdot d \cdot \log n \right) \right)
\end{equation}
samples from the underlying distribution $\bbP$, one can learn $\bbP$ up to total variation (TV) distance $\eps$ with probability at least $1-\delta$.

One standard way to measure the complexity of a Bayes net is by imposing an upper bound on any node's in-degree.
In this work, we measure the complexity in terms of the number of parameters $p$, which is a more fine-grained measure than that of maximum in-degree $d$.
For instance, a Bayes net on $n$ Boolean variables with maximum in-degree $d$ could have as few as $O(n + 2^d)$ parameters (e.g., a star graph where $d$ leaves point towards the center of the start) and as many as $\Omega(n \cdot 2^d)$ parameters (e.g., a complete graph).

\subsection{Our Contributions}

Our two main contributions are that we extend the hardness result of \cite{chickering2004large} and generalize the finite sample complexity result of \cite{brustle2020multi}.

\paragraph{Contribution 1.}

We extend the hardness result of \cite{chickering2004large} to the setting where we are guaranteed that the Bayes net in question is promised to have a small number of parameters.
In computational complexity theory, this is also known as a \emph{promise problem}, which generalizes a decision problem in that the input is promised to belong to a certain subset of possible inputs.
Our new hardness result confirms the common intuition that it is hard to search for a Bayes net $\cG$ that is Markov with respect to a given probability distribution, even if it is known that the distribution in question is Markov with respect to a Bayes net that has a small number of parameters.

\begin{definition}[The \PROMISE\ problem]
Given as input variables $\bX = (X_1, \ldots, X_n)$, a probability distribution $\bbP$ on $\bX$, a parameter bound $p \in \N$, and the promise that there exists a Bayes net $\cG$ with at most $p$ parameters such that $\bbP$ is Markov with respect to $\cG$, output a Bayes net $\cG'$ with at most $p$ parameters such that $\bbP$ is Markov with respect to $\cG'$.
\end{definition}

\begin{restatable}{theorem}{nphardthm}
\label{thm:REALIZABLE-LEARN-is-NP-hard}
\PROMISE\ is $\NP$-hard.
\end{restatable}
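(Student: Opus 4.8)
The plan is to establish $\NP$-hardness of \PROMISE\ by a polynomial-time Turing reduction from the decision problem \LEARN, which is $\NP$-hard by \cite{chickering2004large}. The guiding observation is that a candidate solution to \PROMISE\ is \emph{efficiently verifiable}: given a purported output DAG $\cG'$, one can check in polynomial time both that $\cG'$ has at most $p$ parameters (a direct computation from $\cG'$ and the domain sizes of the $X_i$) and that $\bbP$ is Markov with respect to $\cG'$. This verifiability is what lets us turn a solver for the promise problem, whose behaviour is unconstrained on non-realizable inputs, into a sound decision procedure for \LEARN.

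Concretely, let $A$ be a polynomial-time algorithm that solves \PROMISE\ correctly whenever the promise holds, with polynomial running-time bound $T$. Given a \LEARN\ instance $(\bX, \bbP, p)$, I would run $A$ for at most $T$ steps to obtain a DAG $\cG'$ (declaring failure if it does not halt), and then run the verifier. If $(\bX, \bbP, p)$ is a yes-instance, the promise of \PROMISE\ is met, so $A$ halts within $T$ steps with a valid $p$-parameter Bayes net and verification passes; output \emph{yes}. If it is a no-instance, then \emph{no} $p$-parameter Bayes net realizes $\bbP$, so whatever $A$ returns necessarily fails verification; output \emph{no}. The composed procedure decides \LEARN\ in polynomial time, so a polynomial-time solver for \PROMISE\ would place \LEARN\ in $\mathsf{P}$, forcing $\PROMISE$ to be $\NP$-hard.

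The step I expect to be the main obstacle is proving that the Markov-verification is genuinely polynomial time. Since $\bbP$ is presented as a marginal of a bounded-in-degree Bayes net, testing the local Markov conditions of $\cG'$ — that each node is conditionally independent of its non-descendants given its parents — appears to demand marginal and conditional probabilities of $\bbP$, and inference in Bayes nets is intractable in general. I would address this by exploiting the particular structure of the distributions output by the reduction of \cite{chickering2004large}, for which the relevant conditional independences should be readable directly, rather than attempting inference on a generic succinct $\bbP$.

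Should the verification route prove awkward, I would fall back on reducing directly from the combinatorial search problem underlying \cite{chickering2004large}'s hardness proof (a feedback-arc-set variant, as the macro \DBFAS\ suggests). Re-examining that construction, one checks that on yes-instances $\bbP$ is realizable by a $p$-parameter net (so the promise holds) and that the DAG of \emph{any} such net encodes a solution (e.g.\ a minimum feedback arc set) of the source instance. A single call to a \PROMISE-solver then returns a net from which the combinatorial solution is extracted and verified in polynomial time, so solving \PROMISE\ solves an $\NP$-hard search problem.
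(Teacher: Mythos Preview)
Your proposal is correct and follows the same route as the paper. The paper reduces from \LEARNDBFAS\ (the restriction of \LEARN\ to instances produced by \cite{chickering2004large}'s reduction), runs the hypothetical \PROMISE-solver, and then verifies its output by checking the parameter count and the Markov property. The obstacle you correctly flag---polynomial-time Markov verification---is resolved exactly along the line you anticipate: \cite{chickering2004large} already proved \LEARNDBFAS\ is $\NP$-hard \emph{even when given an independence oracle for $\bbP$} (\cref{thm:LEARNDBFAS-NP-hard}), so the reduction may freely assume such an oracle. With it, the paper's \texttt{Checker} (\cref{cor:check-inclusion-using-an-independence-oracle-general-case}) verifies Markovness by embedding the candidate DAG in a complete supergraph and, for each absent edge $A\to B$, querying whether $A\ind B\mid \Pa_{\cG}(B)\setminus\{A\}$; \cref{lem:a-subgraph-defines-a-distribution} shows these local checks suffice. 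Your phrase ``the relevant conditional independences should be readable directly'' is precisely this idea---the missing keyword was \emph{independence oracle}. Your fallback of extracting a \DBFAS\ solution from the returned net is plausible but is not the route the paper takes.
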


Technically speaking, \PROMISE\ is a promise search problem.
While $\NP$-hardness results usually revolve around decision problems, $\NP$-hardness naturally extends to the more general case of search problems when Turing reductions are considered.
(Turing reductions comprise a very broad class of reductions, whereby an efficient algorithm for a problem yields an efficient algorithm for another.)

\paragraph{Contribution 2.}

We generalized the finite sample result of \cite{brustle2020multi} from the \emph{degree-bounded} setting to the \emph{parameter-bounded} setting.

\begin{restatable}[Approximating parameter-bounded Bayes nets using samples]{theorem}{samplecomplexitythm}
\label{thm:parameter-sample-complexity}
Fix any accuracy parameter $\eps > 0$ and confidence parameter $\delta > 0$.
Given sample access to a distribution $\bbP$ over $n$ variables, each defined on the alphabet $\Sigma$, and the promise that $\bbP$ is Markov with respect to a Bayes net with at most $p$ parameters,
\[
\cO \! \left( \frac{\log \frac{1}{\delta}}{\eps^2} \left( p \log \left( \frac{n \abs{\Sigma}}{\eps} \right) + n \frac{\log \left( \frac{p}{n \left( |\Sigma| -1 \right)} \right)}{\log \abs{\Sigma}} \log n \right) \right)
\]
samples from $\bbP$ suffice to learn the underlying DAG of a Bayes net $\cG$ with at most $p$ parameters and define a distribution $\bbQ$ that is Markov with respect to $\cG$ such that $\tv(\bbP, \bbQ) \leq \eps$ with success probability at least $1 - \delta$.
\end{restatable}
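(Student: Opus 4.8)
The plan is to reduce the task to \emph{hypothesis selection} over a finite, explicitly constructed cover of the class of Bayes nets with at most $p$ parameters, and then to bound the logarithm of the cover size, since this quantity is exactly what drives the sample complexity. The tool is the standard (confidence-amplified) selection guarantee, as used by \cite{brustle2020multi}: for any finite family $\cC$ of distributions, given sample access to $\bbP$, there is an estimator that, using $\cO\!\left(\frac{\log\frac{1}{\delta}}{\eps^2}\log\abs{\cC}\right)$ samples, returns some $\bbQ\in\cC$ with $\tv(\bbP,\bbQ)\le C\cdot\min_{\bbQ'\in\cC}\tv(\bbP,\bbQ')+\eps$ for an absolute constant $C$, with probability at least $1-\delta$. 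It then suffices to design a family $\cC$ of at most-$p$-parameter Bayes nets that (i) contains a distribution $\cO(\eps)$-close to $\bbP$, so that the $\min$ term is negligible (and absorbed by rescaling $\eps$), and (ii) has $\log\abs{\cC}$ matching the two summands in the claimed bound.

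For (i), I would let $\cC$ range over every DAG structure realizable with at most $p$ parameters, each equipped with CPT entries drawn from a uniform grid of resolution $\gamma$. Since $\bbP$ is Markov with respect to some Bayes net $\cG^\star$ whose CPTs use at most $p$ parameters, rounding those CPTs to the grid produces a member of $\cC$; a routine telescoping (hybrid) argument over the nodes bounds the resulting change in joint TV distance by $\cO(n\abs{\Sigma}\gamma)$, so taking $\gamma=\Theta(\eps/(n\abs{\Sigma}))$ makes $\min_{\bbQ'\in\cC}\tv(\bbP,\bbQ')=\cO(\eps)$. Per structure, the grid yields at most $(1/\gamma)^{p}$ parameterizations, contributing a $\cO\!\left(p\log\frac{n\abs{\Sigma}}{\eps}\right)$ term to $\log\abs{\cC}$; this is the first summand.

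The second, combinatorial summand is the crux, and is precisely where the passage from the degree-bounded setting of \cite{brustle2020multi} to the parameter-bounded setting bites. I would over-count the admissible DAG structures by assigning each node $i$ an in-degree $d_i$ and then choosing its parents among the other nodes, giving at most $\sum_{(d_1,\dots,d_n)}\prod_i\binom{n}{d_i}$ structures, where the sum ranges over in-degree sequences with $\sum_i(\abs{\Sigma}-1)\abs{\Sigma}^{d_i}\le p$ (ignoring acyclicity only inflates the count). Since $\log\prod_i\binom{n}{d_i}\le\left(\sum_i d_i\right)\log n$, the heart of the estimate is to maximize $\sum_i d_i$ subject to $\sum_i\abs{\Sigma}^{d_i}\le p/(\abs{\Sigma}-1)$. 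Writing $t_i=\abs{\Sigma}^{d_i}$ turns this into maximizing $\sum_i\log_{\abs{\Sigma}}t_i$ under $\sum_i t_i\le p/(\abs{\Sigma}-1)$, and concavity of the logarithm places the optimum at the balanced point $t_i\equiv p/(n(\abs{\Sigma}-1))$, whence $\sum_i d_i\le n\log_{\abs{\Sigma}}\frac{p}{n(\abs{\Sigma}-1)}=n\,\frac{\log\frac{p}{n(\abs{\Sigma}-1)}}{\log\abs{\Sigma}}$. This balanced exponent is exactly the ``effective average in-degree'' appearing in the theorem, and multiplying it by $\log n$ gives the second summand; the number of distinct in-degree sequences is at most $(1+\log_{\abs{\Sigma}}p)^n$, whose logarithm is of lower order and is absorbed into the same term.

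Combining the two parts gives $\log\abs{\cC}=\cO\!\left(p\log\frac{n\abs{\Sigma}}{\eps}+n\,\frac{\log\frac{p}{n(\abs{\Sigma}-1)}}{\log\abs{\Sigma}}\log n\right)$, and substituting into the selection bound reproduces the stated sample complexity. The step I expect to be the main obstacle is the structure count: one must verify that the convexity argument genuinely caps $\sum_i d_i$ (in particular that the promise forces $p\ge n(\abs{\Sigma}-1)$, so the exponent is nonnegative), that replacing genuine DAGs by arbitrary parent assignments inflates $\cC$ only harmlessly, and that the count of in-degree sequences is dominated by $n\,\frac{\log(p/(n(\abs{\Sigma}-1)))}{\log\abs{\Sigma}}\log n$ across the full parameter range; the perturbation lemma in (i), while standard, must also be tracked carefully to land the precise $\log(n\abs{\Sigma}/\eps)$ factor.
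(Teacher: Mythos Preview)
Your proposal is correct and follows essentially the same route as the paper: both reduce to hypothesis selection over a finite cover, bound the CPT discretization by an $\frac{\eps}{n}$-net per conditional (your $(1/\gamma)^{p}$ matches the paper's $(n|\Sigma|/\eps)^{O(p)}$ count), and bound the number of admissible DAG structures by choosing parents for each in-degree sequence while capping $\sum_i d_i$ via AM--GM/concavity of the logarithm. The only cosmetic differences are that the paper cites \cite{brustle2020multi}'s per-node conditional TV lemma (your ``routine telescoping'') and uses stars-and-bars rather than your cruder $(1+\log_{|\Sigma|}p)^n$ for the in-degree-sequence count, neither of which changes the argument.
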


Notice that when all in-degrees are at most $d$, we have $p \leq (\abs{\Sigma} - 1) \cdot n \cdot \abs{\Sigma}^d$, so our result generalizes the bound of~\cite{brustle2020multi} given in \cref{eq:BCD20-bound}.\footnote{One can further generalize \cref{thm:parameter-sample-complexity} to the case where each node has a different alphabet size, e.g., $X_i$ has alphabet $\Sigma_i$, but this is a straightforward extension.}

Finally, we note that while \cref{thm:parameter-sample-complexity} runs in time polynomial in $1/\delta$ and $1/\eps^2$, and it has exponential dependency on the number of samples from $\bbP$, similar to \cite{brustle2020multi}.

\subsection{Paper Outline}

After preliminaries in \cref{sec:preliminaries}, we give a high-level overview of the techniques behind our results in \Cref{sec:technical-overview}.
We then formally prove \cref{thm:REALIZABLE-LEARN-is-NP-hard} and \cref{thm:parameter-sample-complexity} in \cref{sec:np-hard} and \cref{sec:upper-bound}, respectively.
Finally, we conclude with an open problem in \cref{sec:discussion}.

\section{Preliminaries}

\label{sec:preliminaries}

\subsection{Notation}

We define the set of natural numbers by $\N$ and all logarithms refer to the natural $\log$.
Distributions are written as $\bbP$, $\bbQ$ and graphs in calligraphic letters, e.g., $\mathcal{G}, \mathcal{H}, \mathcal{K}$.
For variables/nodes, we use capital letters, small letters for the values taken by them, and boldface versions for a collection of variables, e.g., $X = x$ and $\bX = \bx$.
As shorthands, we write $[n]$ for $\{1, \ldots, n\}$ and $\bbP(\bx)$ for $\bbP(\bX = \bx)$.
We will often represent the same set of variables of distributions as nodes in a graph.

Problems and algorithms are named in the typewriter font in full caps and capitalized, respectively, e.g., $\texttt{PROBLEM}$ and $\texttt{Algorithm}$.

We will also often use $\Sigma$ to denote the alphabet set of a variable and write $\Delta_{|\Sigma|}$ to denote the corresponding (conditional) probability simplex.

\subsection{Graph-Theoretical Notions}

\label{sec:prel-graphs}

Let $\cG = (\bX, \bE)$ be a fully directed graph on $|\bX| = n$ vertices and $|\bE|$ edges where adjacencies are denoted with dashes, e.g., $X - Y$, and arc directions are denoted with arrows, e.g., $X \to Y$.
For any node $X \in \bX$, we write $\Pa_{\cG}(X) \subseteq \bX$ to denote its parents and $\pa_{\cG}(X)$ to denote the values they take.

A \emph{degree sequence of a graph $\cG$} on vertex set $\bX = \{X_1, \ldots, X_n\}$ is a list of degrees $\bd = (d_1, \ldots, d_n)$ of all vertices in the graph, i.e., vertex $X_i$ has degree $d_i$.
A graph $\cH = (\bX, \bE)$ is said to \emph{realize} degree sequence $\bd$ if the degrees of $\bX$ in $\cH$ agree with $\bd$.
Realizability is defined in a similar fashion for in-degree sequences $\bd^- = (d_1^-, \ldots, d_n^-)$.

The graph $\cG$ is called a \emph{directed acyclic graph (DAG)} if it does not contain any directed cycles and is said to be \emph{complete} if for every two of its nodes $U,V \in \bX$ either there is an edge $V \to U$ or an edge $U\to V$, i.e., the underlying undirected graph is a clique.
A vertex $V_i$ on any simple path $V_1 - \ldots - V_k$ is called a \emph{collider} if the arcs are such that $V_{i-1} \to V_i \gets V_{i+1}$.

A Bayesian network (or Bayes net) $\cG$ for a set of $n$ variables $X_1, \ldots, X_n$ is described by a DAG $(\bX, \bE)$ and $n$ corresponding conditional probability tables (CPTs), e.g., the CPT for $X_i \in \bX$ describes $\bbP(x_i \mid \pa_{\cG}(X_i))$ for all possible values of $x_i$ and $\pa_{\cG}(X_i)$.
The joint distribution for $\bbP$ factorizes as
\[
\bbP(\bx) = \prod_{i=1}^n \bbP(x_i \mid \pa_{\cG}(X_i)),
\]
and we say that \emph{$\cG$ represents $\bbP$}.

All independence constraints that hold in the joint distribution of a Bayes net that has underlying DAG $\cG$ are exactly captured by the \emph{d-separation} criterion \cite[Section 3.3.1]{pearl1988probabilistic}.
Two nodes $X, Y \in \bX$ are said to be \emph{d-separated} in a DAG $\cG = (\bX, \bE)$ given a set $\bZ \in \bX \setminus \{X,Y\}$ if and only if there is no $\bZ$-active path in $\cG$ between $X$ and $Y$; a $\bZ$-active path is a simple path $Q$ such that any vertex from $\bZ$ on $Q$ occurs as a collider and any vertex from $\bX \setminus \bZ$ appears as a non-collider.
Two nodes are \emph{d-connected} if they are not d-separated.
It is known that $X$ is d-separated from its non-descendants given its parents \cite[Section 3.3.1, Corollary 4]{pearl1988probabilistic}.

A probability distribution $\bbP$ is said to be \emph{Markov with respect to a DAG $\cG$} if d-separation in $\cG$ implies conditional independence in $\bbP$.
Note that any distribution is Markov with respect to the complete DAG, since there are no d-separations implied by this kind of DAG.
(Moreover, any Bayes net over a complete DAG requires $2^{|\bX|} - 1$ parameters to describe.)
A probability distribution $\bbP$ is said to be \emph{Markov with respect to a Bayes net $\cG$} if $\bbP$ is Markov with respect to the underlying DAG of $\cG$.

\subsection{Some Problems of Interest}

\cite{chickering2004large} introduced a decision problem about learning Bayes nets from data, called \LEARN, and proved that \LEARN\ is $\NP$-hard by showing a reduction from the $\NP$-hard decision problem degree-bounded feedback arc set (\DBFAS).
See \Cref{def:DBFAS} and \Cref{def:LEARN}.

\begin{definition}[The \DBFAS\ decision problem]
\label{def:DBFAS}
Given a directed graph $\cG = (\bX, \bE)$ with maximum vertex degree of $3$, and a positive integer $k \leq |\bE|$, determine whether there is a subset of edges $\bE' \subseteq \bE$ with of size $|\bE'| \leq k$ such that $\bE'$ contains at least one directed edge from every directed cycle in $\cG$.
\end{definition}

\begin{definition}[The \LEARN\ decision problem]
\label{def:LEARN}
Given variables $\bX = (X_1, \ldots, X_n)$, a probability distribution $\bbP$ over $\bX$, and a parameter bound $p \in \N$, determine whether there exists a Bayes net $\cG$ with at most $p$ parameters such that $\bbP$ is Markov with respect to $\cG$.
\end{definition}

In our work, we focus on the particular \LEARN\ instances $(\bX, \bbP, p)$ used in \cite{chickering2004large}, namely \LEARNDBFAS.

\begin{definition}[The \LEARNDBFAS\ decision problem]
Let $R$ denote the reduction of \cite{chickering2004large} from \DBFAS\ to \LEARN.
We define as \LEARNDBFAS\ the set of instances of \LEARN\ that are in the range of $R$.
\end{definition}

That is, for any instance $I_L$ of \LEARNDBFAS, there is some instance $I_D$ of \DBFAS\ such that $R\cprn{I_D} = I_L$.

An \emph{independence oracle for a distribution $\bbP$} is an oracle that can determine, in constant time, whether or not $U \ind V \mid \bZ$ for any $U, V \in \bX$ and $\bZ \subseteq \bX \setminus \{U,V\}$.
We will use the following result by \cite{chickering2004large}.

\begin{theorem}[\cite{chickering2004large}]
\label{thm:LEARNDBFAS-NP-hard}
\LEARNDBFAS\ is $\NP$-hard even when one is given access to an independence oracle.
\end{theorem}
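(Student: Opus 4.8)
The plan is to establish \cref{thm:LEARNDBFAS-NP-hard} by a polynomial-time many-one reduction from \DBFAS\ to \LEARNDBFAS\ whose defining feature is that every query posed to the independence oracle can itself be answered in polynomial time directly from the \DBFAS\ instance. Granting the deciding algorithm oracle access to the conditional independences of $\bbP$ then confers no advantage, so a polynomial-time decision procedure for \LEARNDBFAS\ with such an oracle would yield a polynomial-time algorithm for \DBFAS. Since \DBFAS\ is $\NP$-hard, this gives the claim (unless $\mathsf{P} = \NP$).

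First I would recall the reduction $R$ of \cite{chickering2004large}. On input a bounded-degree directed graph $\cG = (\bX, \bE)$ and a threshold $k$, the map $R$ constructs, in time polynomial in the size of $\cG$, a distribution $\bbP$ presented succinctly as the marginal of an \emph{explicit} Bayes net over an enlarged vertex set $\bX \cup \bS$ (where $\bS$ is the set of variables to be marginalized out), together with a parameter bound $p$ chosen so that $\bbP$ admits a Bayes net with at most $p$ parameters if and only if $\cG$ has a feedback arc set of size at most $k$. By definition the instance $(\bX, \bbP, p)$ so produced lies in \LEARNDBFAS. The correctness of $R$ — the two-way correspondence between minimum feedback arc sets of $\cG$ and minimum-parameter Bayes nets for $\bbP$ — is inherited directly from \cite{chickering2004large}.

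Next I would argue that the independence oracle is, in effect, free. By construction, the conditional independence relations of $\bbP$ are governed entirely by the combinatorial structure of the explicit Bayes net underlying $R$: any query $U \ind V \mid \bZ$ reduces to testing a graphical separation criterion in the explicitly given DAG, taking the marginalized variables $\bS$ into account. Because such separation tests are decidable in time polynomial in the size of the DAG, one can simulate the oracle's answer to every query without ever evaluating $\bbP$ numerically, knowing only the underlying \DBFAS\ instance.

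Combining these observations, a hypothetical polynomial-time decision procedure $A$ for \LEARNDBFAS\ equipped with an independence oracle yields a polynomial-time algorithm for \DBFAS: compute $R(\cG, k) = (\bX, \bbP, p)$, run $A$ on this instance, and whenever $A$ issues an independence query, answer it via the polynomial-time separation test above. Since $R$ is a correct reduction, $A$ accepts if and only if $(\cG, k)$ is a YES instance of \DBFAS. The main obstacle is the technical heart borrowed from \cite{chickering2004large}, namely verifying the parameter-count correspondence underpinning the correctness of $R$; a secondary point requiring care is confirming that marginalizing out $\bS$ does not spoil the polynomial-time simulability of the oracle, i.e., that the independences of the \emph{marginal} $\bbP$ remain combinatorially and efficiently computable from the pre-marginalization DAG.
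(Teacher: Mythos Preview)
The paper does not supply its own proof of this statement. \Cref{thm:LEARNDBFAS-NP-hard} is stated in the preliminaries (Section~\ref{sec:preliminaries}) as a known result attributed to \cite{chickering2004large} and is then used as a black box in the proof of \cref{thm:REALIZABLE-LEARN-is-NP-hard}. There is therefore nothing in the present paper to compare your proposal against.

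That said, your sketch is a reasonable outline of how the Chickering--Heckerman--Meek argument proceeds: the reduction $R$ builds $\bbP$ explicitly as the marginal of a small Bayes net manufactured from the \DBFAS\ instance, and because the conditional independences of $\bbP$ are then governed by a graphical separation criterion on an explicitly given DAG (with latent nodes $\bS$), each oracle query can be simulated in polynomial time, so access to the oracle confers no additional power. The two issues you flag as needing care --- the parameter-count correspondence that makes $R$ correct, and the fact that marginalizing out $\bS$ still leaves the independences of $\bbP$ efficiently decidable from the pre-marginalization DAG --- are precisely the substantive technical content of \cite{chickering2004large}; your proposal defers both to that reference rather than supplying them, which is exactly what the present paper does as well.
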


\subsection{Selecting a Close Distribution With Finite Samples}

The classic method to select an approximate distribution amongst a set of candidate distributions is via the Scheff\'{e} tournament of \cite{devroye2001combinatorial}, which provides a logarithmic dependency on the number of candidates.

In our work, we will use the Scheff\'{e}-based algorithm of \cite{daskalakis2014faster},%
\footnote{Their result is actually more general than what we stated here.
For instance, they only require sample access to the distributions in $\Qset = \{\bbQ_1, \ldots, \bbQ_m\}$ while our setting is simpler as we have explicit descriptions of each of these distributions.}
which given sample access to an input distribution and explicit access to some candidate distributions, outputs with high probability a candidate distribution that is sufficiently close, in total variation (TV) distance ($\dtv$), to the input distribution.

\begin{theorem}[\cite{daskalakis2014faster}]
\label{thm:FastTournament}
Fix any accuracy parameter $\eps > 0$ and confidence parameter $\delta > 0$.
Suppose there is a distribution $\bbP$ over variables $\bX$ and a collection of explicit distributions $\Qset = \{\bbQ_1, \ldots, \bbQ_m\}$, where each distribution $\bbQ_i$ is defined over the same set $\bX$ and there exists some $\bbQ^* \in \Qset$ such that $\dtv(\bbP, \bbQ) \leq \eps$.
Then, there is an algorithm that uses $\cO \left( \frac{\log 1/\delta}{\eps^2} \log m \right)$ samples from $\bbP$ and returns some $\bbQ \in \Qset$ such that $\dtv(\bbP, \bbQ) \leq 10 \eps$ with success probability at least $1 - \delta$ and running time $\poly(m, 1/\delta, 1/\eps^2)$.
\end{theorem}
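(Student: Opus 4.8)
The plan is to prove the statement via the classical Scheff\'e-tournament (hypothesis-selection) machinery, of which the algorithm of \cite{daskalakis2014faster} is a refinement. The starting point is the variational characterization of total variation distance: for any two distributions $\bbQ_i,\bbQ_j$ on $\bX$, the Scheff\'e set $A_{ij}=\brkts{\bx : \bbQ_i(\bx) > \bbQ_j(\bx)}$ satisfies $\dtv(\bbQ_i,\bbQ_j)=\bbQ_i(A_{ij})-\bbQ_j(A_{ij})$, and more generally $\dtv(\bbP,\bbQ)=\max_A \abs{\bbP(A)-\bbQ(A)}$. Crucially, because the candidates in $\Qset$ are given \emph{explicitly}, every set $A_{ij}$ and every number $\bbQ_i(A_{ij}),\bbQ_j(A_{ij})$ is fixed in advance and computable without any samples from $\bbP$.

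The core primitive is a pairwise \emph{duel} between $\bbQ_i$ and $\bbQ_j$: draw samples from $\bbP$, form the empirical estimate $\widehat{p}_{ij}$ of $\bbP(A_{ij})$, and declare $\bbQ_i$ the winner if $\abs{\widehat{p}_{ij}-\bbQ_i(A_{ij})}\le \abs{\widehat{p}_{ij}-\bbQ_j(A_{ij})}$, and $\bbQ_j$ otherwise. Since $A_{ij}$ is a fixed event, Hoeffding's inequality guarantees that $\cO(\log(1/\delta')/\eps^2)$ samples make $\abs{\widehat p_{ij}-\bbP(A_{ij})}\le \eps$ hold except with probability $\delta'$. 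The key \emph{duel lemma} then states: if $\dtv(\bbP,\bbQ_i)\le\eps$ while $\dtv(\bbP,\bbQ_j)$ exceeds a suitable constant multiple of $\eps$, then on this good event $\bbQ_i$ beats $\bbQ_j$. This follows by combining $\abs{\bbP(A_{ij})-\bbQ_i(A_{ij})}\le\eps$ with the triangle inequality $\dtv(\bbQ_i,\bbQ_j)\ge \dtv(\bbP,\bbQ_j)-\dtv(\bbP,\bbQ_i)$ applied on the Scheff\'e set, which forces $\bbQ_j(A_{ij})$ to sit far below $\widehat p_{ij}$.

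To aggregate, I would draw a \emph{single} batch of $\cO\!\prn{(\log m + \log(1/\delta))/\eps^2}$ samples from $\bbP$ and reuse it across all $\binom{m}{2}$ duels. Because each $A_{ij}$ is fixed ahead of time (the candidates are explicit), a union bound over these $\binom{m}{2}$ events ensures that, except with probability $\delta$, every $\widehat p_{ij}$ is within $\eps$ of $\bbP(A_{ij})$ simultaneously; this batch size is dominated by the stated $\cO\!\prn{\frac{\log(1/\delta)}{\eps^2}\log m}$. One then outputs the candidate winning the most duels. Each duel only evaluates two explicit densities on the sampled points and computes $\bbQ_i(A_{ij}),\bbQ_j(A_{ij})$, so the running time is $\poly(m,1/\delta,1/\eps^2)$ (the refinement of \cite{daskalakis2014faster} reduces the number of duels from $\binom{m}{2}$ to near-linear in $m$, but this speedup is not needed for the stated bound). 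On the good event, the promised candidate $\bbQ^{*}$ with $\dtv(\bbP,\bbQ^{*})\le\eps$ wins every duel against a far candidate, hence loses only to candidates that are themselves close to $\bbP$; a standard accounting of the win counts then shows the overall winner $\bbQ$ satisfies $\dtv(\bbP,\bbQ)\le 10\eps$.

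The main obstacle is the correctness analysis rather than the concentration: one must argue that ``winning the most duels'' \emph{certifies} closeness, i.e., that $\bbQ^{*}$ cannot be defeated by too many candidates and that any frequent winner must itself be near $\bbP$. Pinning down the exact leading constant so the guarantee reads $10\eps$ rather than an unspecified $C\eps$ requires carefully propagating the $\eps$ estimation slack through the triangle inequality $\dtv(\bbQ_i,\bbQ_j)\ge\dtv(\bbP,\bbQ_j)-\dtv(\bbP,\bbQ_i)$ in both the duel lemma and the aggregation step. The sampling and union-bound steps are otherwise routine, and the explicitness of $\Qset$ is precisely what lets us fix all Scheff\'e sets beforehand and share one sample batch.
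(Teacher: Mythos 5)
The first thing to say is that the paper does not prove this statement at all: it is imported verbatim from \cite{daskalakis2014faster} as a black-box tool, so there is no internal proof to compare against, and your proposal must be judged as a reconstruction of the cited result. The machinery you invoke --- Scheff\'e sets $A_{ij}$, the identity $\dtv(\bbQ_i,\bbQ_j)=\bbQ_i(A_{ij})-\bbQ_j(A_{ij})$, a single reused batch of $\cO\!\left((\log m+\log(1/\delta))/\eps^2\right)$ samples, Hoeffding plus a union bound over the $\binom{m}{2}$ fixed events --- is exactly the classical Devroye--Lugosi foundation on which \cite{daskalakis2014faster} builds, and those parts of your sketch are correct; the explicitness of $\Qset$ does let you fix every Scheff\'e set before sampling. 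One point you gloss over: ``computing $\bbQ_i(A_{ij})$'' exactly is only polynomial when the explicit description is essentially tabular, since $A_{ij}$ can be exponentially large in the number of variables; \cite{daskalakis2014faster} instead estimate these probabilities from samples of the candidates, which is how their runtime claim survives large domains. Under the theorem's reading of ``explicit'' this is acceptable, but it deserves a sentence.

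The genuine gap is the aggregation step, which you defer as ``standard accounting of the win counts.'' With your duel rule (the candidate whose Scheff\'e probability is closer to $\widehat{p}_{ij}$ wins), the transferable guarantee is: if $\bbQ_j$ beats $\bbQ_i$ on the good event, then $\dtv(\bbP,\bbQ_j)\le 3\,\dtv(\bbP,\bbQ_i)+2\eps'$. Hence anyone beating $\bbQ^*$ is $5\eps$-close (taking $\eps'=\eps$), but a max-wins winner that \emph{loses} to $\bbQ^*$ is not directly controlled: it can accumulate wins against gray-zone candidates at distance up to $5\eps$ (which may themselves beat $\bbQ^*$), and one application of the transfer bound then only certifies roughly $3\cdot 5\eps+2\eps=17\eps$, exceeding the stated $10\eps$. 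So ``most duel wins'' with unthresholded duels does not obviously deliver the advertised constant. Two standard fixes, either of which completes your plan: (i) replace max-wins by the minimum-distance (Scheff\'e) selection of Devroye--Lugosi --- pick $\bbQ_k$ minimizing $\max_j \abs{\bbQ_k(A_{kj})-\widehat{p}_{kj}}$ --- whose guarantee of order $3\min_j \dtv(\bbP,\bbQ_j)+4\Delta\le 7\eps$ sits comfortably within $10\eps$; or (ii) follow \cite{daskalakis2014faster} and threshold the duels (declare a draw when $\bbQ_i(A_{ij})-\bbQ_j(A_{ij})$ is small) and output a hypothesis that never loses, which is the structure their ChooseHypothesis/FastTournament analysis is built around and is also what their near-linear-in-$m$ speedup exploits. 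As written, your final constant is unsubstantiated; with either fix, the rest of your argument goes through.
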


To curate a set of candidates $\Qset$, we rely on the following lemma of \cite{brustle2020multi} which states that any distribution $\bbQ$ which approximately agrees with $\bbP$ on the \emph{local} conditional distribution at each node will be close in TV distance to $\bbP$ on the entire domain.

\begin{lemma}[\protect\cite{brustle2020multi}]
\label{lem:conditional-is-additive}
Suppose $\bbP$ and $\bbQ$ are Bayes nets on the same DAG $\cG = (\bX, \bE)$ with $n$ nodes.
If
\[
\tv \Big( \bbP(X \mid \Pa_{\cG}(X) = \sigma), \bbQ(X \mid \Pa_{\cG}(X)) = \sigma \Big)
\leq \frac{\eps}{n}
\]
for all nodes $X \in \bX$ and possible parent values $\sigma \in \Sigma^{|\Pa_{\cG}(X)|}$, then $\tv(\bbP, \bbQ) \leq \eps$.
\end{lemma}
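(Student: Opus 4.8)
The plan is to run a hybrid (telescoping) argument along a topological ordering of $\cG$. Since $\bbP$ and $\bbQ$ are Bayes nets on the \emph{same} DAG $\cG$, both factorize according to $\cG$, so $\bbP(\bx) = \prod_{i=1}^n \bbP(x_i \mid \pa_{\cG}(X_i))$ and likewise for $\bbQ$. First I would fix a topological order $X_1, \ldots, X_n$ of $\cG$, so that $\Pa_{\cG}(X_i) \subseteq \{X_1, \ldots, X_{i-1}\}$ for every $i$, and define the hybrids
\[
\mathbbm{H}_k(\bx) = \prod_{i \le k} \bbQ\prn{x_i \mid \pa_{\cG}(X_i)} \cdot \prod_{i > k} \bbP\prn{x_i \mid \pa_{\cG}(X_i)},
\]
for $k = 0, 1, \ldots, n$, so that $\mathbbm{H}_0 = \bbP$ and $\mathbbm{H}_n = \bbQ$. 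Because $\tv$ satisfies the triangle inequality, $\tv(\bbP, \bbQ) \le \sum_{k=1}^n \tv(\mathbbm{H}_{k-1}, \mathbbm{H}_k)$, so it suffices to bound each consecutive gap by $\eps/n$.

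The main step is to evaluate $\tv(\mathbbm{H}_{k-1}, \mathbbm{H}_k)$. These two distributions agree in every factor except the $k$-th, where one uses $\bbP(x_k \mid \pa_{\cG}(X_k))$ and the other $\bbQ(x_k \mid \pa_{\cG}(X_k))$. Expanding $\tv = \frac12 \sum_{\bx} \abs{\mathbbm{H}_{k-1}(\bx) - \mathbbm{H}_k(\bx)}$, factoring out the shared terms, and summing out $X_n, X_{n-1}, \ldots, X_{k+1}$ in reverse topological order --- each such conditional summing to $1$ over its own variable, since its parents are already fixed among the earlier coordinates --- collapses those factors and leaves
\[
\tv(\mathbbm{H}_{k-1}, \mathbbm{H}_k) = \frac12 \sum_{x_1, \ldots, x_k} \prod_{i < k} \bbQ\prn{x_i \mid \pa_{\cG}(X_i)} \cdot \abs{\bbP\prn{x_k \mid \pa_{\cG}(X_k)} - \bbQ\prn{x_k \mid \pa_{\cG}(X_k)}}.
\]
Summing over $x_k$ converts the last factor into $2\,\tv\prn{\bbP(X_k \mid \pa_{\cG}(X_k)),\, \bbQ(X_k \mid \pa_{\cG}(X_k))}$, a quantity that depends only on the parent values $\sigma = \pa_{\cG}(X_k)$ and is at most $2\eps/n$ by hypothesis.

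Finally, I would observe that $\prod_{i < k} \bbQ(x_i \mid \pa_{\cG}(X_i))$, summed over $x_1, \ldots, x_{k-1}$, is exactly the $\bbQ$-marginal on $\{X_1, \ldots, X_{k-1}\}$ and so sums to $1$; hence $\tv(\mathbbm{H}_{k-1}, \mathbbm{H}_k) \le \eps/n$ for each $k$, and summing over the $n$ hybrids gives $\tv(\bbP, \bbQ) \le \eps$. I expect the only real obstacle to be the bookkeeping in the marginalization: the topological ordering must be used to guarantee that, whenever a descendant factor is summed out, its parents already lie among the fixed earlier variables so that the factor integrates to $1$, and that no other surviving factor depends on the variable being eliminated. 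This is exactly what reduces the $k$-th gap to the local conditional TV distance at $X_k$ averaged over its parents, and it is a matter of careful accounting rather than a conceptual difficulty.
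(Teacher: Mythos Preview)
Your hybrid argument is correct: the telescoping along a topological order, the marginalization of the trailing $\bbP$-factors to $1$, and the final averaging of the local conditional TV over the $\bbQ$-marginal on the predecessors are all sound, and they combine to give $\tv(\bbP,\bbQ)\le n\cdot(\eps/n)=\eps$ exactly as you outline. Note, however, that the paper does not supply its own proof of this lemma --- it is quoted verbatim from \cite{brustle2020multi} and used as a black box --- so there is no in-paper argument to compare against; your proof is the standard one and matches what one finds in that reference.
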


Although there are infinitely many possible distributions, since we are satisfied with an approximately close distribution, one can discretize the space via an $\eps$-net.

\begin{definition}[$\eps$-nets; \protect\cite{vershynin2018high}]
Fix a metric space $(\bm{T},d)$.
For any subset $\bm{K} \subseteq \bm{T}$ and $\eps > 0$, a subset $\bm{N} \subseteq \bm{K}$ is called an $\eps$-net of $\bm{K}$ if every point in $\bm{K}$ is within distance $\eps$ to some point in $\bm{N}$.
That is, $\forall x \in \bm{K}, \exists x_0 \in \bm{N}$ such that $d(x, x_0) \leq \eps$.
We say that $\bm{N}$ $\eps$-covers $\bm{K}$.
\end{definition}

As we shall see in \Cref{sec:upper-bound}, the candidate set $\Qset$ will be created by computing an $\frac{\eps}{n}$-net with respect to the TV distance and then applying \cref{lem:conditional-is-additive} suitably.

\subsection{Other Related Work}

We have already referred to some papers that are relevant to our work.
We resume this discussion here.
\cite{dasgupta1999learning} considers the task of learning the maximum-likelihood polytree from data.
The main result of this paper is that the optimal branching (or Chow-Liu tree) is a good approximation to the best polytree.
This result is then complemented by the observation that this learning problem is $\NP$-hard, even to approximately solve within some constant factor.

\cite{teyssier2005ordering} propose a simple heuristic method for addressing the task of learning Bayes nets.
Their approach is based on the fact that the best network (of bounded in-degree) consistent with a given node ordering can be found efficiently.

\cite{elidan2008learning} present a method for learning Bayes nets of bounded treewidth that employs global structure modifications and that is polynomial both in the size of the graph and the treewidth bound.
At the heart of their method is a dynamic triangulation, that they update in a way which facilitates the addition of chain structures that increase the bound on the model’s treewidth by at most one.

\cite{friedman2013learning} introduce an algorithm that achieves learning by restricting the search space.
Their iterative algorithm restricts the parents of each variable to belong to a small subset of candidates.
They then search for a network that satisfies these constraints and the learned network is then used for selecting better candidates for the next iteration.

\cite{ganian2021theComplexity} investigate the parameterized complexity of Bayesian Network Structure Learning (BNSL).
They show that parameterizing BNSL by the size of a feedback edge set yields fixed-parameter tractability.

\cite{kuipers2022efficient} combine constraint-based methods with greedy or Markov chain Monte Carlo (MCMC) schemes in a method which reduces the complexity of MCMC approaches to that of a constraint-based method.

\section{Technical Overview}

\label{sec:technical-overview}

Here, we give a brief high-level overview of the techniques used in our results of \cref{thm:REALIZABLE-LEARN-is-NP-hard} and \cref{thm:parameter-sample-complexity}.

\subsection{\texorpdfstring{$\NP$}{NP}-Hardness of the Realizable Case}

By \Cref{thm:LEARNDBFAS-NP-hard}, it would suffice to prove that the existence of a polynomial time algorithm for \PROMISE\ implies that \LEARNDBFAS\ instances can be solved in polynomial time if one has access to an independence oracle.
The desired result will then follow from the facts that we can efficiently $(a)$ compute the number of parameters of a Bayes net and $(b)$ decide whether a given distribution is Markov with respect to a given Bayes net (when given access to an independence oracle).

Suppose we have a polynomial time algorithm \texttt{Learner} for \PROMISE.
\label{ftnt:promise}
Note that it is conventional to assume that such an algorithm \emph{always} halts within some polynomial-time bound, and outputs \emph{some} Bayes net, \emph{even when the respective promise is violated}.
We define and analyze the following reduction:
\begin{quote}
Given an arbitrary instance $(\bm{X}, \bbP, p)$ of \PROMISE, run \texttt{Learner} to obtain a Bayes net $\mathcal{G}$.
Then check whether $\mathcal{G}$ has at most $p$ parameters and (while using an independence oracle) check whether or not $\bbP$ is Markov with respect to $\mathcal{G}$.
If \emph{both} of these checks are positive, then output YES.
Otherwise, output NO.
See \cref{sec:np-hard} for the formal proof of \cref{thm:REALIZABLE-LEARN-is-NP-hard}.
\end{quote}

\subsection{Approximately Learning Parameter-Bounded Bayes Networks}

The main idea is to construct an $\eps$-net over all possible DAGs that satisfy the parameter upper bound $p$, and then apply a well-known bound from the density estimation literature.

For this purpose, we need to count all possible Bayes nets that satisfy the parameter upper bound $p$.
By a counting argument, we see that there are not many possible DAGs that give rise to some Bayes net of at most $p$ parameters.
Then, by a counting argument again, we see that there are only a few conditional distributions that are Markov with respect to a Bayes net $\cG$ over a DAG that realizes a given in-degree sequence.
Thus we are able to bound the number of distributions that cover all possible conditional distributions which are Markov with respect to $\mathcal{G}$.
See \cref{sec:upper-bound} for the formal proof of \cref{thm:parameter-sample-complexity}.

\section{\PROMISE\ is \texorpdfstring{$\NP$}{NP}-hard}

\label{sec:np-hard}

To show that \PROMISE\ is hard, we reduce \LEARNDBFAS\ to \PROMISE\ by making polynomially-many calls to an independence oracle.
Given any polynomial time algorithm \texttt{Learner} that solves \PROMISE, we will forward the \LEARNDBFAS\ instance to \texttt{Learner} and examine the produced Bayes net $\cG$.
We will describe a polynomial time procedure \texttt{Reduction} that uses an independence oracle to determine whether we should correctly output YES or NO for the given \LEARNDBFAS\ instance.
See \cref{fig:np-hardness-reduction} for a pictorial illustration of our reduction strategy.

\begin{figure}[ht]
\centering
\begin{tikzpicture}[scale = 1.2]
\node[draw, rounded corners, align=center] (DBFAS) at (0,0) {\DBFAS\\ instance};
\node[draw, rounded corners, align=center] (LEARN-DBFAS) at (0,-2) 
{\LEARNDBFAS\\ instance};
\node[draw, rounded corners, align=center] (PROMISE) at (0,-4) 
{\PROMISE\\ instance};
\node[align=center] (LEARNER) at (5,-4) {\texttt{Learner}};
\node[align=center] (REDUCTION) at (5,-2) {\texttt{Reduction}};
\draw[-stealth] ($(DBFAS.south) + (-0.2,0)$) -- node[left, midway]{$\bX, \bbP, p$} ($(LEARN-DBFAS.north) + (-0.2,0)$);
\draw[-stealth] ($(LEARN-DBFAS.north) + (0.2,0)$) -- node[right, midway]{YES or NO} ($(DBFAS.south) + (0.2,0)$);
\draw[-stealth] (LEARN-DBFAS) -- node[left, midway, align=center]{$\bX, \bbP, p$}node[right, midway, align=center]{Additional\\ promise} (PROMISE);
\draw[-stealth] (PROMISE) -- node[above, midway]{$\bX, \bbP, p$}node[below, midway]{Promise} (LEARNER);
\draw[-stealth] (LEARNER) -- node[left, midway, align=center]{Bayes\\ net $\cG$} (REDUCTION);
\draw[-stealth] (REDUCTION) -- node[above, midway]{YES or NO} (LEARN-DBFAS);
\end{tikzpicture}
\caption{\protect\cite{Gav77} showed that \DBFAS\ is $\NP$-hard and \protect\cite{chickering2004large} showed that \LEARNDBFAS\ is $\NP$-hard, even when  given access to an independence oracle for $\bbP$.
\PROMISE\ is a variant of \LEARNDBFAS\ with the additional promise that there exists a Bayes net $\cG$ with at most $p$ parameters such that $\bbP$ is Markov with respect to $\cG$.
In this work, we show that \emph{if} one can learn such a Bayes net $\cG$ (via some blackbox polynomial time algorithm \texttt{Learner}), then there is a polynomial time algorithm \texttt{Reduction} that correctly answers \LEARNDBFAS.
Therefore, \PROMISE\ is also $\NP$-hard.}
\label{fig:np-hardness-reduction}
\end{figure}

We begin by observing that one can easily check the number of parameters of Bayes net given its full description.

\begin{restatable}{lemma}{computeparameters}
\label{lem:compute-parameters}
Given a Bayes net over $\cG = (\bX, \bE)$, one can compute the number of its parameters in polynomial time.
\end{restatable}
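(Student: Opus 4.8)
The plan is to show that the number of parameters of a Bayes net can be read off directly from its DAG structure together with the alphabet sizes, and that this quantity is a simple sum over vertices that can be evaluated in time polynomial in the description length of the Bayes net. First I would recall the relevant definitions: each vertex $X_i \in \bX$ has a CPT describing $\bbP(x_i \mid \pa_{\cG}(X_i))$, and the free parameters of this CPT are exactly the numbers needed to specify, for each possible joint assignment $\pa_{\cG}(X_i) = \sigma$ to the parents, a conditional distribution of $X_i$ over its alphabet $\Sigma$. Since a distribution over an alphabet of size $\abs{\Sigma}$ lies in the simplex $\Delta_{\abs{\Sigma}}$ and is therefore determined by $\abs{\Sigma} - 1$ free parameters, the CPT at node $X_i$ contributes exactly $(\abs{\Sigma} - 1) \cdot \abs{\Sigma}^{\abs{\Pa_{\cG}(X_i)}}$ parameters.

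The key computation is then to observe that the total number of parameters is
\[
p(\cG) = \sum_{i=1}^{n} (\abs{\Sigma} - 1) \cdot \abs{\Sigma}^{\abs{\Pa_{\cG}(X_i)}}.
\]
I would carry out the steps in the following order. First, for each vertex $X_i$ traverse the edge set $\bE$ (or an adjacency/incidence representation of $\cG$) to count its in-degree $\abs{\Pa_{\cG}(X_i)}$; this takes time $\cO(\abs{\bE})$ in total. Second, for each vertex evaluate the term $(\abs{\Sigma} - 1) \cdot \abs{\Sigma}^{\abs{\Pa_{\cG}(X_i)}}$, and third, sum these $n$ terms to obtain $p(\cG)$. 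Each arithmetic operation is on integers whose bit-length is polynomial in the size of the input Bayes net, since $p(\cG)$ is at most the number of entries across all CPTs, which is bounded by the description length of the Bayes net itself; hence the additions and the computation of the powers $\abs{\Sigma}^{\abs{\Pa_{\cG}(X_i)}}$ (by repeated multiplication) all run in polynomial time.

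The one subtlety I would want to address carefully, rather than a genuine obstacle, is bookkeeping about what counts as a parameter. One should confirm that we count only the free parameters per conditional distribution, i.e.\ $\abs{\Sigma} - 1$ rather than $\abs{\Sigma}$, matching the convention used throughout the paper (as in the remark that a complete DAG requires $2^{\abs{\bX}} - 1$ parameters, consistent with the Boolean case $\abs{\Sigma} = 2$ where the root of a clique contributes $1$ parameter and the full count telescopes to $2^n - 1$). A second minor point is to verify that the claim holds verbatim when the paper allows per-node alphabets $\Sigma_i$, in which case the $i$-th term becomes $(\abs{\Sigma_i} - 1) \cdot \prod_{X_j \in \Pa_{\cG}(X_i)} \abs{\Sigma_j}$; the same traversal-and-sum procedure applies and remains polynomial time. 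I do not expect any hard part here: the lemma is essentially a direct consequence of the definition of a CPT combined with the observation that the stated sum is efficiently computable, so the entire argument is a short verification rather than a construction requiring a clever idea.
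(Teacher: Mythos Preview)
Your proposal is correct and matches the paper's proof, which simply writes the parameter count as $\sum_{X \in \bX} \bigl( (|\Sigma_X| - 1) \prod_{U \in \Pa_{\cG}(X)} |\Sigma_U| \bigr)$ and asserts it is computable in polynomial time. If anything, your version is more detailed about the runtime justification than the paper's one-line argument.
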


\begin{proof}
Let $\Sigma_X$ denote the alphabet set of $X \in \bX$.
Then, the number of parameters of $\cG$ is
\[
\sum_{X \in \bX} \left( (|\Sigma_X| - 1) \prod_{U \in \Pa_{\cG}(X)} |\Sigma_U| \right),
\]
which can be computed in polynomial time.
\end{proof}

The following notion of important edges will come handy in the sequel.

\begin{definition}[Important edges]
\label{def:important}
Let $\cG = (\bV, \bE)$ be a DAG and $\bbP$ be a distribution over $\bV$.
Then, an edge $e \in \bE$ is called \emph{$(\cG, \bbP)$-important} if $\bbP$ is Markov with respect to $\cG$ but is not Markov with respect to $\cG' = (\bV, \bE \setminus \{e\})$.
\end{definition}

To check whether $\bbP$ is Markov with respect to $\cG$, one could verify that any d-separation in $\cG$ implies conditional independence in $\bbP$.
However, this computation seems to be intractable.
In contrast, \cref{cor:check-inclusion-using-an-independence-oracle-general-case} gives a polynomial time algorithm that checks this \emph{while using an independence oracle}.

The correctness of \cref{cor:check-inclusion-using-an-independence-oracle-general-case} follows from \cref{lem:importance-via-oracle} and \cref{lem:a-subgraph-defines-a-distribution}.

\begin{restatable}{lemma}{importanceviaoracle}
\label{lem:importance-via-oracle}
Suppose $\bbP$ on variables $\bX$ is Markov with respect to $\cG = (\bX, \bE)$.
Then, an edge $A \to B$ in $\bE$ is not $(\cG, \bbP)$-important if $A \ind B \mid \Pa_{\cG}(B) \setminus \{A\}$.
\end{restatable}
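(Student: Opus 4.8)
The plan is to reduce the claim to a statement about recursive factorization. Since $\bbP$ is Markov with respect to $\cG$ by hypothesis, by \cref{def:important} the edge $A \to B$ fails to be $(\cG,\bbP)$-important precisely when $\bbP$ is \emph{also} Markov with respect to the edge-deleted DAG $\cG' = (\bX, \bE \setminus \{A \to B\})$. So it suffices to establish that $\bbP$ is Markov with respect to $\cG'$. I would do this through the classical equivalence between the (global) Markov property defined via d-separation and the recursive factorization of $\bbP$ along a DAG (see, e.g., \cite{pearl1988probabilistic}): a distribution $\bbP$ is Markov with respect to a DAG $\cH$ if and only if $\bbP(\bx) = \prod_{X \in \bX} \bbP(x \mid \pa_{\cH}(X))$, where the conditionals are those of $\bbP$ itself.

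First I would write out the factorization guaranteed by $\bbP$ being Markov with respect to $\cG$, namely
\[
\bbP(\bx) = \prod_{X \in \bX} \bbP\big(x \mid \pa_{\cG}(X)\big).
\]
The only structural difference between $\cG$ and $\cG'$ occurs at $B$: every node $X \neq B$ satisfies $\Pa_{\cG'}(X) = \Pa_{\cG}(X)$, since the sole deleted edge is directed into $B$, whereas $\Pa_{\cG'}(B) = \Pa_{\cG}(B) \setminus \{A\}$. Moreover $\cG'$ is still a DAG, as deleting an edge cannot create a directed cycle. Hence, to exhibit the factorization of $\bbP$ along $\cG'$, it suffices to replace the single factor corresponding to $B$ and verify that it is unchanged.

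The crux is therefore the identity
\[
\bbP\big(x_B \mid \pa_{\cG}(B)\big) = \bbP\big(x_B \mid \pa_{\cG'}(B)\big),
\]
which I would derive directly from the hypothesis $A \ind B \mid \Pa_{\cG}(B) \setminus \{A\}$: conditioning on $\Pa_{\cG}(B)$ is the same as conditioning jointly on $A$ together with $\Pa_{\cG}(B) \setminus \{A\} = \Pa_{\cG'}(B)$, and the stated conditional independence says exactly that further conditioning on $A$ leaves the conditional law of $B$ unchanged. Substituting this identity into the $\cG$-factorization yields $\bbP(\bx) = \prod_{X \in \bX} \bbP(x \mid \pa_{\cG'}(X))$, i.e.\ $\bbP$ factorizes along $\cG'$, and hence $\bbP$ is Markov with respect to $\cG'$. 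This makes $A \to B$ not $(\cG,\bbP)$-important, as desired.

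The main obstacle I anticipate is not the algebra but the careful invocation of the factorization–Markov equivalence, since the paper defines the Markov property through d-separation rather than through factorization; I would state (or cite) this equivalence explicitly and apply it to both $\cG$ and $\cG'$. A secondary technical point is that conditionals such as $\bbP(x_B \mid \pa_{\cG}(B))$ are only defined on parent configurations of positive probability, so the substitution should be understood up to $\bbP$-measure-zero events; this is harmless, as only configurations of positive probability contribute to the product.
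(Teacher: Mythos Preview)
Your proposal is correct and follows essentially the same approach as the paper: factorize $\bbP$ along $\cG$, use the assumed conditional independence to drop $A$ from the conditioning set in the factor for $B$, and recognize the result as the factorization along $\cG'$. The paper's proof is exactly this chain of equalities (with the Markov--factorization equivalence invoked implicitly), so your explicit citation of that equivalence and the measure-zero caveat only make the argument slightly more careful.
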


\begin{proof}
Consider an arbitrary edge $A \to B$ in $\bE$ such that $A \ind B \mid \Pa_{\cG}(B) \setminus \{A\}$.
Say, $A = X_j$ and $B = X_k$.
Letting $\cG' = (\bV, \bE \setminus (A,B))$ be a subgraph of $\cG$ that does not contain the edge $A \to B$, we see that
\begin{align*}
\bbP(\bx)
= &\; \prod_{i=1}^n \bbP(x_i \mid \pa_{\cG}(X_i)) && (\ast)\\
= &\; \bbP(x_k \mid \pa_{\cG}(X_k)) \cdot \prod_{i \in [n] \setminus k} \bbP(x_i \mid \pa_{\cG}(X_i))\\
= &\; \bbP(x_k \mid \pa_{\cG}(X_k) \setminus x_j) \cdot \prod_{i \in [n] \setminus k} \bbP(x_i \mid \pa_{\cG}(X_i)) && (\dag)\\
= &\; \bbP(x_k \mid \pa_{\cG'}(X_k)) \cdot \prod_{i \in [n] \setminus k} \bbP(x_i \mid \pa_{\cG'}(X_i)) && (\ddag)\\
= &\; \prod_{i=1}^n \bbP(x_i \mid \pa_{\cG'}(X_i)),
\end{align*}
where $(\ast)$ is due to $\bbP$ being Markov with respect to $\cG$, $(\dag)$ is due to $X_j \ind X_k \mid \Pa_{\cG}(X_k) \setminus \{X_j\}$, and $(\dag)$ is due to the definition of $\cG'$.
Since $\bbP(\bx) = \prod_{i=1}^n \bbP(x_i \mid \pa_{\cG'}(X_i))$, we see that $\bbP$ is also Markov with respect to $\cG'$, and so the edge $A \to B$ is not $(\cG, \bbP)$-important.
\end{proof}

\begin{restatable}{lemma}{asubgraphdefinesadistribution}
\label{lem:a-subgraph-defines-a-distribution}
Suppose a distribution $\bbP$ on variables $\bX$ is Markov with respect to a DAG $\cG = (\bX, \bE)$.
Let $\cG' = (\bX, \bE')$ be an edge-induced DAG of $\cG$ with $\bE' \subseteq \bE$.
Then, $\bbP$ is Markov with respect to $\cG'$ if and only if $A \ind B \mid \Pa_{\cG}(B) \setminus \{A\}$ for all edges $A \to B$ in $\bE \setminus \bE'$.
\end{restatable}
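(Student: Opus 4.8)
The plan is to prove the two implications separately, in both cases exploiting the standard equivalence that $\bbP$ is Markov with respect to a DAG if and only if $\bbP$ factorizes according to it (this is exactly the equivalence invoked at the end of \cref{lem:importance-via-oracle}). Fix the removed edge set $\bE \setminus \bE'$, and for each node $B$ collect its removed parents $R_B = \{A : A \to B \in \bE \setminus \bE'\}$, so that $\Pa_{\cG'}(B) = \Pa_{\cG}(B) \setminus R_B$.

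For the forward implication ($\bbP$ Markov with respect to $\cG'$ implies the edgewise independences), I would argue purely at the level of conditional independence. Every parent of $B$ in $\cG$ is a non-descendant of $B$ in $\cG$, and since deleting edges cannot create descendants, these vertices remain non-descendants of $B$ in $\cG'$; moreover none of the removed parents $R_B$ lie in $\Pa_{\cG'}(B)$. Hence the local Markov property of $\bbP$ with respect to $\cG'$ gives $B \ind (\text{non-descendants of } B) \mid \Pa_{\cG'}(B)$, and restricting by the decomposition axiom to the subset $R_B$ yields $B \ind R_B \mid \Pa_{\cG'}(B)$. For any single removed edge $A \to B$ (so $A \in R_B$), the weak-union axiom then moves the other removed parents into the conditioning set, producing $A \ind B \mid \Pa_{\cG'}(B) \cup (R_B \setminus \{A\}) = \Pa_{\cG}(B) \setminus \{A\}$, which is exactly the claimed independence. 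Both decomposition and weak union hold for every distribution, so this direction needs no regularity assumption.

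For the backward implication (the edgewise independences imply $\bbP$ Markov with respect to $\cG'$), I would re-derive the factorization of $\bbP$ along $\cG'$ from that along $\cG$, generalizing \cref{lem:importance-via-oracle} from a single edge to all of $\bE \setminus \bE'$ at once. Starting from $\bbP(\bx) = \prod_i \bbP(x_i \mid \pa_{\cG}(X_i))$, it suffices to replace each factor $\bbP(x_i \mid \pa_{\cG}(X_i))$ by $\bbP(x_i \mid \pa_{\cG'}(X_i))$, and this replacement at node $B$ is justified exactly when $\bbP(x_B \mid \pa_{\cG}(B)) = \bbP(x_B \mid \pa_{\cG'}(B))$, i.e. when $B \ind R_B \mid \Pa_{\cG'}(B)$. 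Thus the direction reduces, for each node $B$, to upgrading the hypothesised collection of edgewise independences $\{A \ind B \mid \Pa_{\cG}(B) \setminus \{A\} : A \in R_B\}$ into the single joint statement $B \ind R_B \mid \Pa_{\cG'}(B)$.

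This upgrade is the step I expect to be the main obstacle. The hypotheses condition on the large sets $\Pa_{\cG}(B) \setminus \{A\} = \Pa_{\cG'}(B) \cup (R_B \setminus \{A\})$, whereas the target conditions on the small set $\Pa_{\cG'}(B)$; combining two such statements, e.g. $B \ind A_1 \mid \Pa_{\cG'}(B) \cup \{A_2\}$ and $B \ind A_2 \mid \Pa_{\cG'}(B) \cup \{A_1\}$ into $B \ind \{A_1, A_2\} \mid \Pa_{\cG'}(B)$, is precisely the intersection axiom of conditional independence, applied iteratively to peel off the elements of $R_B$. The intersection axiom is not a universal graphoid property: it requires $\bbP$ to be strictly positive, and indeed the two-bit example $A_1 = A_2 = B$ satisfies both edgewise hypotheses yet violates the joint conclusion. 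I therefore expect the crux to be this positivity-sensitive contraction, and I would handle it either by assuming (or arranging) that $\bbP$ has full support, or by carrying the factorization identities directly on the support of $\bbP$ so that every relevant conditional is well defined.
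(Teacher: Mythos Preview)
Your forward direction matches the paper's: both argue that each removed parent $A$ of $B$ remains a non-descendant of $B$ in $\cG'$ and then invoke the Markov property of $\bbP$ with respect to $\cG'$. You are more explicit than the paper about the weak-union step needed to pass from conditioning on $\Pa_{\cG'}(B)$ to conditioning on the larger set $\Pa_{\cG}(B)\setminus\{A\}$; the paper simply writes the two as if they coincided.

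For the backward direction the paper takes a slightly different route: it orders the edges of $\bE\setminus\bE'$ arbitrarily, removes them one at a time to form intermediate DAGs $\cG=\cG_0,\cG_1,\ldots,\cG'$, and at each step invokes \cref{lem:importance-via-oracle} to conclude that $\bbP$ stays Markov with respect to $\cG_i$. Your approach instead handles all removed parents of each node $B$ at once and reduces the question to the single joint independence $B\ind R_B\mid\Pa_{\cG'}(B)$.

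Crucially, the obstacle you isolate is real and is \emph{not} avoided by the paper's sequential argument. To apply \cref{lem:importance-via-oracle} when deleting $e_i=A\to B$ from $\cG_{i-1}$, one needs $A\ind B\mid \Pa_{\cG_{i-1}}(B)\setminus\{A\}$, but the hypothesis only provides $A\ind B\mid \Pa_{\cG}(B)\setminus\{A\}$; these conditioning sets differ precisely when an earlier removed edge also pointed into $B$. Bridging that difference is exactly the intersection axiom you name, and your counterexample with $A_1=A_2=B$ (take $\cG$ the complete DAG on $\{A_1,A_2,B\}$ and delete both edges into $B$) is valid: both edgewise hypotheses hold, yet $\bbP$ is not Markov with respect to $\cG'$. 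So the backward implication, as stated, requires an additional assumption such as strict positivity of $\bbP$; the paper's proof glosses over this, and your ``carry the factorization on the support'' alternative does not rescue it either, since the counterexample already has all relevant conditionals well defined.
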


\begin{proof}
We prove each direction separately.

\paragraph{($\Leftarrow$)}

Suppose that $\bbP$ is Markov with respect to $\cG'$.
Consider an arbitrary edge $A \to B \in \bE \setminus \bE'$.
Since $A$ is an ancestor of $B$ in $\cG'$, we have that $A$ remains a non-descendant of $B$ in $\cG'$ after removing the edge $A \to B$.
So, $A$ and $B$ are d-separated in $\cG'$ given $\Pa_{\cG'}(B) \setminus \{A\}$, and so $A \ind B \mid \Pa_{\cG'}(B)$ by the Markov property.
That is, $A \ind B \mid \Pa_{\cG}(B) \setminus \{A\}$.

\paragraph{($\Rightarrow$)}

Suppose that $A \ind B \mid \Pa_{\cG}(B) \setminus \{A\}$ for all edges $A \to B$ in $\bE \setminus \bE'$.
Order the edges in $\bE \setminus \bE'$ in an arbitrary sequence, say $e_1, \ldots, e_{|\bE \setminus \bE'|}$.
Let us remove these edges sequentially, resulting in a sequence of edge-induced DAGs $\cG = \cG_0, \cG_1, \ldots, \cG_{|\bE \setminus \bE'|} = \cG'$, where $\cG_i$ is the edge-induced DAG obtained from removing edges $\{e_1, \ldots, e_i\}$ from $\cG$.
Observe that non-descendant relationships are preserved as we remove edges, i.e., if $A$ is a non-descendant of $B$ in $\cG$, then it is also a non-descendant of $B$ in $\cG_i$ for any $i \in \{1, \ldots, |\bE \setminus \bE'|\}$.
So, we can apply \cref{lem:importance-via-oracle} repeatedly:
For any $i \in \{1, \ldots, |\bE \setminus \bE'|\}$, the edge $e_i$ is not $(\cG_{i-1}, \bbP)$-important, so $\bbP$ is Markov with respect to $\cG_i$.
That is, $\bbP$ is Markov with respect to $\cG'$.
\end{proof}

\begin{restatable}{corollary}{checkinclusion}
\label{cor:check-inclusion-using-an-independence-oracle-general-case}
Suppose $\bbP$ is a distribution over $\bX$ and $\cG$ is a Bayes net over the same set of variables $\bX$.
Then, there is a polynomial time algorithm that uses an independence oracle for $\bbP$ to decide whether or not $\bbP$ is Markov with respect to $\cG$.
\end{restatable}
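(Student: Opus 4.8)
The plan is to reduce the decision to a polynomial number of independence-oracle queries by invoking \cref{lem:a-subgraph-defines-a-distribution}, but there is one caveat to resolve first. Both \cref{lem:importance-via-oracle} and \cref{lem:a-subgraph-defines-a-distribution} presuppose that $\bbP$ is already Markov with respect to the ``larger'' DAG, whereas here we are precisely trying to \emph{decide} a Markov relationship and so may not assume it for $\cG$. To sidestep this, I would not use $\cG$ itself as the larger graph; instead I would embed $\cG$ inside a complete DAG, with respect to which \emph{every} distribution is trivially Markov (as noted in \cref{sec:prel-graphs}, complete DAGs imply no d-separations).

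Concretely, I would first compute a topological ordering of the DAG underlying $\cG$, which is possible in polynomial time since $\cG$ is acyclic. Let $\cK$ be the complete DAG on $\bX$ obtained by directing every pair of vertices according to this ordering, i.e., the transitive tournament consistent with $\cG$. Since every edge of $\cG$ respects its own topological ordering, $\cG$ is an edge-induced subgraph of $\cK$, and because $\cK$ is complete, $\bbP$ is automatically Markov with respect to $\cK$. This places us exactly in the hypothesis of \cref{lem:a-subgraph-defines-a-distribution}, with the larger graph being $\cK$ and the smaller graph being $\cG$.

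Applying that lemma, $\bbP$ is Markov with respect to $\cG$ if and only if $A \ind B \mid \Pa_{\cK}(B) \setminus \{A\}$ holds for every edge $A \to B$ in $\bE(\cK) \setminus \bE(\cG)$, where $\Pa_{\cK}(B)$ is simply the set of vertices preceding $B$ in the chosen ordering. Each such condition is a single independence query, answerable by the oracle in constant time, and there are at most $\binom{n}{2}$ edges in $\cK$, hence at most polynomially many queries in total. The algorithm would therefore construct $\cK$, issue one oracle query per edge in $\bE(\cK) \setminus \bE(\cG)$, and output YES precisely when all of these queries return independence (and NO otherwise).

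I expect the only real obstacle to be conceptual rather than technical: recognizing that the two preparatory lemmas cannot be applied to $\cG$ directly, and that the complete DAG supplies a ``universal'' Markov anchor from which $\cG$ is reached purely by edge deletions. Once $\cK$ is in place, correctness is immediate from \cref{lem:a-subgraph-defines-a-distribution}, and the polynomial bound on the number of oracle calls is clear from $\abs{\bE(\cK)} \leq \binom{n}{2}$.
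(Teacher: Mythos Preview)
Your proposal is correct and follows essentially the same approach as the paper: embed $\cG$ in a complete DAG $\cK$ (with respect to which every distribution is trivially Markov), then apply \cref{lem:a-subgraph-defines-a-distribution} to reduce the Markov check to at most $\binom{n}{2}$ oracle queries over the edges in $\bE(\cK)\setminus\bE(\cG)$. In fact you are slightly more explicit than the paper, which simply asserts the existence of a complete supergraph $\cK$ of $\cG$; your construction via a topological ordering of $\cG$ spells out why such a $\cK$ exists and can be built in polynomial time, and your use of $\Pa_{\cK}(B)$ matches what \cref{lem:a-subgraph-defines-a-distribution} actually requires.
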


\begin{proof}
Consider the following algorithm $\texttt{Checker}$:
\begin{quote}
Given Bayes net $\cG$ over a DAG $(\bX, \bE')$, consider a DAG $\cK = (\bX, \bE)$, with $\bE' \subseteq \bE$, that is a complete supergraph of $(\bX, \bE')$.
If \emph{every} edge $A \to B \in \bE \setminus \bE'$ satisfies $A \ind B \mid \Pa_{\cG}(B) \setminus \{A\}$, output YES.
Otherwise, output NO.
\end{quote}
Note that since any distribution is Markov with respect to the complete DAG (see \Cref{sec:prel-graphs}), $\bbP$ is Markov with respect to $\cK$.

The correctness of $\texttt{Checker}$ follows from \cref{lem:a-subgraph-defines-a-distribution}.
$\texttt{Checker}$ runs in polynomial time as $\cK$ can be created in polynomial time with respect to the size of $\bX$, and the number of edges in $\bE \subseteq \bE'$ to check is polynomial in the size of $\bX$.
\end{proof}

We are now ready to formally prove our first main result.

\nphardthm*

\begin{proof}
It suffices to show the existence of a polynomial time algorithm for \PROMISE\ implies that \LEARNDBFAS\ instances can be answered in polynomial time with access to an independence oracle; see \cref{fig:np-hardness-reduction}.

Suppose we have a polynomial time algorithm \texttt{Learner} for \PROMISE.
Let us define a reduction algorithm \texttt{Reduction} as follows:
\begin{quote}
Given an instance $(\bX, \bbP, p)$ of \LEARNDBFAS, run \texttt{Learner} to obtain a Bayes net $\cG$ (see \Cref{ftnt:promise}; this is a natural assumption for algorithms solving a search promise problem).
Compute the number of parameters of $\cG$.
Run algorithm $\texttt{Checker}$ of \cref{cor:check-inclusion-using-an-independence-oracle-general-case} on $\cG$ to check whether or not $\bbP$ is Markov with respect to $\cG$.
Output YES if $\cG$ has at most $p$ parameters \emph{and} $\bbP$ is Markov with respect to $\cG$; else, output NO.
\end{quote}
The correctness of \texttt{Reduction} follows from the assumption that \texttt{Learner} produces a Bayes net $\cG$ with at most $p$ parameters such that $\bbP$ is Markov with respect to $\cG$ (if the underlying promise is satisfied; otherwise, the output is an arbitrary Bayes net), and the correctness of $\texttt{Checker}$.
By assumption, \texttt{Learner} is a polynomial time algorithm.
By \cref{lem:compute-parameters}, we can compute the number of parameters of $\cG$ in polynomial time.
By \cref{cor:check-inclusion-using-an-independence-oracle-general-case}, \texttt{Checker} is also a polynomial time algorithm.
Therefore, the overall running time for \texttt{Reduction} is polynomial.
\end{proof}

\section{Approximating Bayes Nets}

\label{sec:upper-bound}

Our strategy for proving our finite sample complexity result (\cref{thm:parameter-sample-complexity}) follows that of \cite[Theorem 10]{brustle2020multi}, but we specialize the analysis to the setting where we are given a parameter bound instead of a degree bound.
As discussed in \cref{sec:intro}, our result is a generalization of their result since an upper bound on the in-degrees implies a (possibly loose) parameter upper bound.

\subsection{Some Graph Counting Arguments}

To prove \cref{thm:parameter-sample-complexity}, we require an upper bound on the number of possible Bayes nets on $n$ nodes that have at most $p$ parameters (\cref{lem:parameter-graph-count}).
To obtain such a result, we first relate the number of parameters $p$ with a specific given in-degree sequence $(d_1^-, \ldots, d_n^-)$ of a Bayes net, then we upper bound the total number of Bayes nets that has at most $p$ parameters by summing over all suitable in-degree sequences $\bd^- = (d_1^-, \ldots, d_n^-)$.

Consider an arbitrary Bayes net $\cG$ with in-degree sequence $(d_1^-, \ldots, d_n^-)$ and each node taking on $|\Sigma|$ values.
Since the conditional distribution for vertex $X_i$ is fully described when we know $\bbP(x_i \mid \pa_{\cG}(X_i))$ for $|\Sigma|-1$ possible values of $x_i$, with respect to $\abs{\Sigma}^{d_i^-}$ possible values of $\pa_{\cG}(X_i)$.
Therefore, we see that the Bayes net has
\[
\sum_{i=1}^n \left( \left( |\Sigma| - 1 \right) |\Sigma|^{d_i^-} \right)
= \left( |\Sigma| - 1 \right) \left( \sum_{i=1}^n |\Sigma|^{d_i^-} \right)
\]
parameters.
Note that this is the exact same reasoning as in \cref{lem:compute-parameters}.
So, if the Bayes net has at most $p$ parameters, then
\begin{equation}
\label{eq:rearranged-p-bound}
\sum_{i=1}^n |\Sigma|^{d_i^-}
= |\Sigma|^{d_1^-} + \ldots + |\Sigma|^{d_n^-}
\leq \frac{p}{|\Sigma| - 1}.
\end{equation}
By the AM-GM inequality, we have that
\begin{equation}
\label{eq:am-gm-bound}
\sum_{i=1}^n |\Sigma|^{d_i^-}
\geq n \left(\prod_{i=1}^n |\Sigma|^{d_i^-}\right)^{\frac{1}{n}}
= n |\Sigma|^{\frac{1}{n} \sum_{i=1}^n d_i^-}.
\end{equation}
Combining \cref{eq:rearranged-p-bound,eq:am-gm-bound} together gives us
\begin{equation}
\label{eq:indeg-sum-bound}
d_1^- + \ldots + d_n^-
\leq n \frac{\log \left( \frac{p}{n \left( |\Sigma| -1 \right)} \right) }{\log |\Sigma|}.
\end{equation}
The following lemma is a combinatorial fact upper bounding on the number of graphs that realize a given degree sequence, which may be of independent interest beyond being used to prove \cref{lem:parameter-graph-count}.

\begin{restatable}{lemma}{DAGincountbound}
\label{lem:DAG-incount-bound}
Given an in-degree sequence $\bd^- = (d_1^-, \ldots, d_n^-)$ with non-negative integers $d_1^-, \ldots, d_n^-$, there are at most $\prod_{i=1}^n \binom{n-1}{d_i^-}$ DAGs that realize $\bd^-$.
\end{restatable}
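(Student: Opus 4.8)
The plan is to over-count by dropping the acyclicity requirement and exploiting the fact that, in a directed graph on labeled vertices, the in-neighborhoods may be chosen independently of one another. First I would observe that any directed graph on the vertex set $\bX = \{X_1, \ldots, X_n\}$ with no self-loops is completely and uniquely determined once we fix, for each vertex $X_i$, the set $\Paa{X_i}{\cG}$ of its in-neighbors; conversely, any choice of one subset $S_i \subseteq \bX \setminus \{X_i\}$ per vertex yields a valid directed graph. Hence describing a directed graph that realizes $\bd^-$ amounts to nothing more than specifying, for each $i$, a subset $S_i \subseteq \bX \setminus \{X_i\}$ of size $|S_i| = d_i^-$.

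Next I would count these choices. For each fixed $i$ there are $\binom{n-1}{d_i^-}$ ways to pick a $d_i^-$-element subset of the $n-1$ vertices other than $X_i$, and crucially these choices are independent across $i$: once acyclicity is dropped, the parent set of one vertex places no constraint whatsoever on the parent set of another. Therefore the number of directed graphs realizing the in-degree sequence $\bd^-$ — cyclic ones included — is exactly $\prod_{i=1}^n \binom{n-1}{d_i^-}$.

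Finally, since every DAG that realizes $\bd^-$ is in particular one of these directed graphs (DAGs have no self-loops, and acyclicity is only an additional restriction, never an additional degree of freedom), the collection of DAGs realizing $\bd^-$ embeds into the collection of all directed graphs realizing $\bd^-$. The claimed upper bound $\prod_{i=1}^n \binom{n-1}{d_i^-}$ follows immediately.

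I do not expect a genuine obstacle here; the entire content is the single observation that dropping acyclicity decouples the per-vertex parent choices and collapses the count into a product. The one point worth stating carefully is that we deliberately do \emph{not} attempt an exact count of DAGs: the product is a strict over-count in general, since many of the enumerated directed graphs contain directed cycles, but for an upper bound this slack is harmless and is precisely what keeps the argument short.
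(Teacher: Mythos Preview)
Your proposal is correct and takes essentially the same approach as the paper: both over-count by independently choosing a $d_i^-$-element parent set for each vertex from the remaining $n-1$ vertices, yielding the product $\prod_{i=1}^n \binom{n-1}{d_i^-}$, and both note that acyclicity only discards some of these configurations. The paper phrases this as a sequential edge-adding process where later choices may be ``incompatible'' with earlier ones, while you phrase it as counting all directed graphs and then restricting to DAGs, but the content is identical.
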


\begin{proof}
Fix an arbitrary labelling of vertices from $X_1$ to $X_n$ and consider the sequential process of adding edges into $X_1, \ldots, X_n$.
For $X_1$, there are $\binom{n-1}{d_1^-}$ ways to add $d_1^-$ incoming edges that end at $X_1$.
For $X_2$, there are $\binom{n-1}{d_2^-}$ possibilities.
For $X_3$, there are \emph{at most} $\binom{n-1}{d_3^-}$ possibilities.
Note that some of these choices would be \emph{incompatible} with earlier edge choices as the newly added edges may cause directed cycles to be formed.
We repeat this edge adding process until all vertices have added their incoming edges to the graph.
So, the \emph{upper bound} is $\prod_{i=1}^n \binom{n-1}{d_i^-}$.
\end{proof}

\begin{lemma}
\label{lem:parameter-graph-count}
Suppose that every node takes on at most $|\Sigma|$ values.
Then, there are at most
\[
\left( n-1 \right)^{\frac{n \log \left( \frac{p}{n \left( |\Sigma| -1 \right)} \right)}{\log \abs{\Sigma}}}
e^n \left(\frac{\log \left( \frac{p}{n \left( |\Sigma| -1 \right)} \right)}{\log \abs{\Sigma}} + 1\right)^n
\]
possible DAGs over $n$ nodes that may be used to define some Bayes net that has at most $p$ parameters.
\end{lemma}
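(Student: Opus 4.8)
The plan is to count the target DAGs in two levels: first sum over all \emph{admissible in-degree sequences}, and then, for each fixed sequence, bound the number of DAGs realizing it using \cref{lem:DAG-incount-bound}. For brevity, write $L := \frac{\log\left( p/(n(|\Sigma|-1)) \right)}{\log|\Sigma|}$, so that the two factors in the statement are exactly $(n-1)^{nL}$ and $e^n(L+1)^n$. By \eqref{eq:indeg-sum-bound}, every DAG that can support a Bayes net with at most $p$ parameters has an in-degree sequence $\bd^- = (d_1^-, \ldots, d_n^-)$ satisfying $\sum_{i=1}^n d_i^- \leq nL$. Hence the total number of such DAGs is at most
\[
\sum_{\bd^- \,:\, \sum_i d_i^- \leq nL} \; \prod_{i=1}^n \binom{n-1}{d_i^-},
\]
and it suffices to bound two quantities separately: the largest value the product can attain over admissible sequences, and the number of admissible sequences.

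For the product, I would apply the crude bound $\binom{n-1}{d_i^-} \leq (n-1)^{d_i^-}$ termwise, giving $\prod_{i=1}^n \binom{n-1}{d_i^-} \leq (n-1)^{\sum_i d_i^-} \leq (n-1)^{nL}$, which already yields the first factor in the statement. For the number of admissible sequences, I would use the standard stars-and-bars count: the number of non-negative integer tuples $(d_1^-, \ldots, d_n^-)$ with $\sum_i d_i^- \leq S$ equals $\binom{S+n}{n}$. Taking $S = \lfloor nL \rfloor \leq nL$ and invoking the estimate $\binom{m}{n} \leq (em/n)^n$ (which follows from $n! \geq (n/e)^n$) gives
\[
\binom{S+n}{n} \leq \left( \frac{e(S+n)}{n} \right)^{\! n} \leq \big( e(L+1) \big)^n = e^n (L+1)^n,
\]
which is precisely the remaining factor. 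Multiplying the two bounds produces the claimed upper bound.

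All steps are elementary, so there is no deep obstacle; the main care is bookkeeping. The one point that warrants attention is that $nL$ need not be an integer, so I would first replace the sum constraint by the integer bound $\sum_i d_i^- \leq \lfloor nL \rfloor$ before invoking $\binom{S+n}{n}$, and only then relax $S+n \leq n(L+1)$ inside the exponential estimate. A secondary subtlety, already handled by \cref{lem:DAG-incount-bound}, is the acyclicity constraint: that lemma \emph{overcounts} by ignoring which edge choices create directed cycles, so its bound $\prod_i \binom{n-1}{d_i^-}$ is a genuine upper bound on DAGs realizing each sequence, and summing over sequences therefore preserves the upper bound without any further correction for the DAG condition.
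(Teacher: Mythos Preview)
Your proposal is correct and follows essentially the same route as the paper: sum over admissible in-degree sequences using \cref{lem:DAG-incount-bound}, bound each product by $(n-1)^{\sum_i d_i^-}\le (n-1)^{nL}$ via \eqref{eq:indeg-sum-bound}, and count the sequences by stars-and-bars together with $\binom{m}{n}\le (em/n)^n$. If anything, your handling of the non-integrality of $nL$ (passing through $\lfloor nL\rfloor$ before relaxing) is slightly more careful than the paper's own write-up.
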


\begin{proof}
By \cref{lem:DAG-incount-bound}, there are $\prod_{i=1}^n \binom{n-1}{d_i^-}$ possible DAGs realizing any fixed in-degree sequence $\bd^- = (d_1^-, \ldots, d_n^-)$.
Let $(\ast)$ denote the condition that an in-degree sequence $\bd^-$ yields a graph that has at most $p$ parameters.
Then,
\begin{align*}
\sum_{\text{$\bd^-$ satisfies $(\ast)$}} \prod_{i=1}^n \binom{n-1}{d_i^-}
\leq &\; \sum_{\text{$\bd^-$ satisfies $(\ast)$}} \left(n-1\right)^{d_1^- + \ldots + d_n^-}\\
\leq &\; \left(n-1\right)^{n \frac{\log\left(\frac{p}{n \left(|\Sigma|-1\right)}\right)}{\log |\Sigma|}} \sum_{\text{$\bd^-$ satisfies $(\ast)$}} 1\\
\leq &\; (n-1)^{n \frac{\log\left(\frac{p}{n \left( |\Sigma|-1 \right)}\right)}{\log |\Sigma|}} \binom{n \frac{\log \left( \frac{p}{n \left( |\Sigma| -1 \right)} \right)}{\log |\Sigma|} + n}{n}\\
\leq &\; \left( n-1 \right)^{n \frac{\log \left( \frac{p}{n \left( |\Sigma| - 1 \right)}\right)}{\log |\Sigma|}} \left(e \left( \frac{\log \left( \frac{p}{n \left( |\Sigma| -1 \right)} \right)}{\log |\Sigma|} + 1 \right) \right)^n
\end{align*}
where the first and last inequalities is because $\binom{n}{k} \leq \left(\frac{en}{k}\right)^k \leq n^k$, the second inequality is due to \cref{eq:indeg-sum-bound}, and the third inequality is obtained via standard ``stars and bars'' counting.
That is, we introduce an auxiliary variable $d_0^-$ and count the number of non-negative integer solutions of
\[
d_0^- + d_1^- + \ldots + d_n^-
= n \frac{\log \left( \frac{p}{n \left( |\Sigma| -1 \right)} \right)}{\log |\Sigma|}.
\qedhere
\]
\end{proof}

\subsection{Proof of \texorpdfstring{\Cref{thm:parameter-sample-complexity}}{}}

We are now ready to prove \cref{thm:parameter-sample-complexity}.
Since \cref{lem:conditional-is-additive} tells us that it suffices to approximate each local conditional distribution at each node well.
So, we will consider an $\frac{\eps}{n}$-net over all such distributions and then apply a tournament style argument (\cref{thm:FastTournament}) to pick a good candidate amongst the joint distribution obtained by a combination of such candidate local distributions.

\samplecomplexitythm*

\begin{proof}
Fix a DAG $\cG$ satisfying an arbitrary in-degree sequence $\bd^- = \left(d_1^-, \ldots, d_n^-\right)$.
Then, there are $|\Sigma|^{d_1^-} + \ldots + |\Sigma|^{d_n^-}$ local conditional distributions for any Bayes net over $\cG$.
From \cref{eq:rearranged-p-bound} above, we know that
\[
\sum_{i=1}^n |\Sigma|^{d_i^-}
= |\Sigma|^{d_1^-} + \ldots + |\Sigma|^{d_n^-}
\leq \frac{p}{|\Sigma| - 1}.
\]
Now, consider an arbitrary local distribution over $k = |\Sigma|$ values and let us upper bound the number of points in an $\frac{\eps}{n}$-net for this metric space.
Observe that each possible distribution is essentially an element of the probability simplex $\Delta_k$.
To get an $\frac{\eps}{n}$-net of $\Delta_k$, we discretize vectors by rounding them to their nearest multiple of $\frac{\varepsilon}{n\abs{\Sigma}}$.
If $\pi$ is a probability vector, and $r_\pi$ is its rounding, then $\bars{\pi-r_\pi}_1 \leq \frac{\varepsilon}{n\abs{\Sigma}} \abs{\Sigma} = \frac{\varepsilon}{n}$.
Therefore the number of discretized vectors is at most $\cO \left( \left({n \abs{\Sigma}}/{\eps} \right)^{|\Sigma|} \right)$.

Therefore, for any fixed DAG $\cG$, there is a set of
\begin{equation}
\label{eq:eps-net-count}
m_1 \in \cO\left( \left({n \abs{\Sigma}}/{\eps} \right)^{\frac{p |\Sigma|}{|\Sigma| - 1}} \right)
\end{equation}
distributions that $\frac{\eps}{n}$-cover any possible joint distributions that can be Markov with respect to a Bayes net over $\cG$.
Meanwhile, by \cref{lem:parameter-graph-count}, there are at most
\begin{align}
\label{eq:param-graph-count}
m_2
:= \left( n-1 \right)^{\frac{n \log \left( \frac{p}{n \left( |\Sigma| - 1 \right)} \right)}{\log \abs{\Sigma}}}
e^n \left(\frac{\log \left( \frac{p}{n \left( |\Sigma| -1 \right)} \right)}{\log \abs{\Sigma}} + 1\right)^n
\end{align}
possible DAGs that may be used to define a Bayes net on $n$ nodes that has at most $p$ parameters.

We can now define a set of distributions $\Qset$ over $n$ variables such that there exists $\bbQ^* \in \Qset$ such that $\tv(\bbP, \bbQ) \leq \eps$.
Let us denote $m = |\Qset|$.
Putting together the above bounds, we see that there are at most $m = m_1 \cdot m_2$ candidates suffice, where $m_1$ and $m_2$ are from \cref{eq:eps-net-count,eq:param-graph-count}.
Therefore, with
\begin{align*}
\cO\!\left( \frac{\log \frac{1}{\delta}}{\eps^2} \log m \right)
\subseteq
\cO\!\left(  \frac{\log \frac{1}{\delta}}{\eps^2}  \left(  p \log \left( \frac{n \abs{\Sigma}}{\eps} \right) + \frac{n \log \left( \frac{p}{n \left( |\Sigma| -1 \right)} \right)}{\log \abs{\Sigma}} \log n  \right) \right)
\end{align*}
samples from $\bbP$, \cref{thm:FastTournament} chooses a distribution $\bbQ$ amongst the $m$ candidates such that $\tv(\bbP, \bbQ) \leq \eps$ with success probability at least $1 - \delta$.
\end{proof}

\section{Conclusion}

\label{sec:discussion}

In this work, we showed the hardness result of finding a parameter-bounded Bayes net that represents some distribution $\bbP$, given sample access to $\bbP$, \emph{even under the promise that such a Bayes net exists}.
On a positive note, we gave a finite sample complexity bound sufficient to produce a Bayes net representing a probability distribution $\bbQ$ that is close in TV distance to $\bbP$.
Our results generalize earlier known results of \cite{chickering2004large} and \cite{brustle2020multi} respectively.

An intriguing open question is as follows:
\begin{quote}
Suppose we are given sample access to a distribution $\bbP$ and are promised that there exists a Bayes net on $\cG$ with at most $p$ parameters such that $\bbP$ is Markov with respect to $\cG$.
Is it hard to find a Bayes net $\cG'$ that has $\alpha \cdot p$ parameters such that $\bbP$ is Markov with respect to $\cG'$ (where $\cG'$ may not be $\cG$), for some constant $\alpha > 1$?
\end{quote}
Note that the hardness construction of \cite{chickering2004large} only displayed an \emph{additive} gap in the parameter bound.
We conjecture that it is also hard to obtain such a \emph{multiplicative} gap in the parameter bound, even in the promise setting.

\section*{Acknowledgements}

We would like to thank Dimitris Zoros for helpful discussions.
This research/project is supported by the National Research Foundation, Singapore under its AI Singapore Programme (AISG Award No: AISG-PhD/2021-08-013).
The work of AB was supported in part by National Research Foundation Singapore under its NRF Fellowship Programme (NRF-NRFFAI-2019-0002) and an Amazon Faculty Research Award.
SG’s work was partially supported by the SERB CRG Award CRG/2022/007985 and an IIT Kanpur initiation grant.
Part of this work was done while the authors were visiting the Simons Institute for the Theory of Computing.


\end{document}